\title{Paying Attention to Deflections: Mining Pragmatic Nuances for Whataboutism Detection in Online Discourse}
\author{Khiem Phi \quad Noushin Salek Faramarzi \quad Chenlu Wang \quad Ritwik Banerjee \\
        Department of Computer Science \\
        Stony Brook University, New York, USA \\
        \texttt{\small\{kphi, nsalekfarama, chenlwang, rbanerjee\}@cs.stonybrook.edu}}
\newcommand{\plus}{\raisebox{.1\height}{\scalebox{.8}{+}}}
\definecolor{bestblue}{RGB}{20,160,250}
\definecolor{worstyellow}{RGB}{240,200,100}
\newcommand{\sqboxs}{1.2ex}
\newcommand{\sqbox}[1]{\textcolor{#1}{\rule{\sqboxs}{\sqboxs}}}
\begin{document}
\maketitle

\begin{abstract}
Whataboutism, a potent tool for disrupting narratives and sowing distrust, remains under-explored in quantitative NLP research. Moreover, past work has not distinguished its use as a strategy for misinformation and propaganda from its use as a tool for pragmatic and semantic framing. We introduce new datasets from Twitter\footnote{We use the name ``Twitter'', since our data collection and analysis was conducted while the platform still used that name.} and YouTube, revealing overlaps as well as distinctions between whataboutism, propaganda, and the \textit{tu quoque} fallacy. Furthermore, drawing on recent work in linguistic semantics, we differentiate the `what about' lexical construct from whataboutism. Our experiments bring to light unique challenges in its accurate detection, prompting the introduction of a novel method using attention weights for negative sample mining. We report significant improvements of \textbf{4\%} and \textbf{10\%} over previous state-of-the-art methods in our Twitter and YouTube collections, respectively.\footnote{Code and data: \href{http://github.com/KhiemPhi/wabt-det}{github.com/KhiemPhi/wabt-det}.}

\end{abstract}

\section{Introduction}\label{sec:intro}
Whataboutism is the practice of deflecting criticism or avoiding an unfavorable issue by raising a different, more favorable matter, or by making a counter-accusation. Since its first use in 1974~\cite{zimmer2017roots}, it has emerged as a common variation of the classical fallacy known as \textit{tu quoque} (lit. ``you also'') -- attacking the opponent's behavior or action for being inconsistent with their argument, thereby discrediting them.
Despite significant work devoted to misinformation and propaganda, the detection of whataboutism has largely relied on the ease of tracking ``\textit{what about}'' phrases. This, however, is informed by popular notions of the phenomenon, leading to a na\"ive linguistic treatment and subsequently, a neglect of the unique challenges to its detection. Many ``what about'' phrases do not, in fact, signal propagandist use (Fig.~\ref{fig:comment-examples}).

\begin{figure}[!t]
\centering
\includegraphics[width=.94\linewidth, cfbox=gray 2pt 5pt]{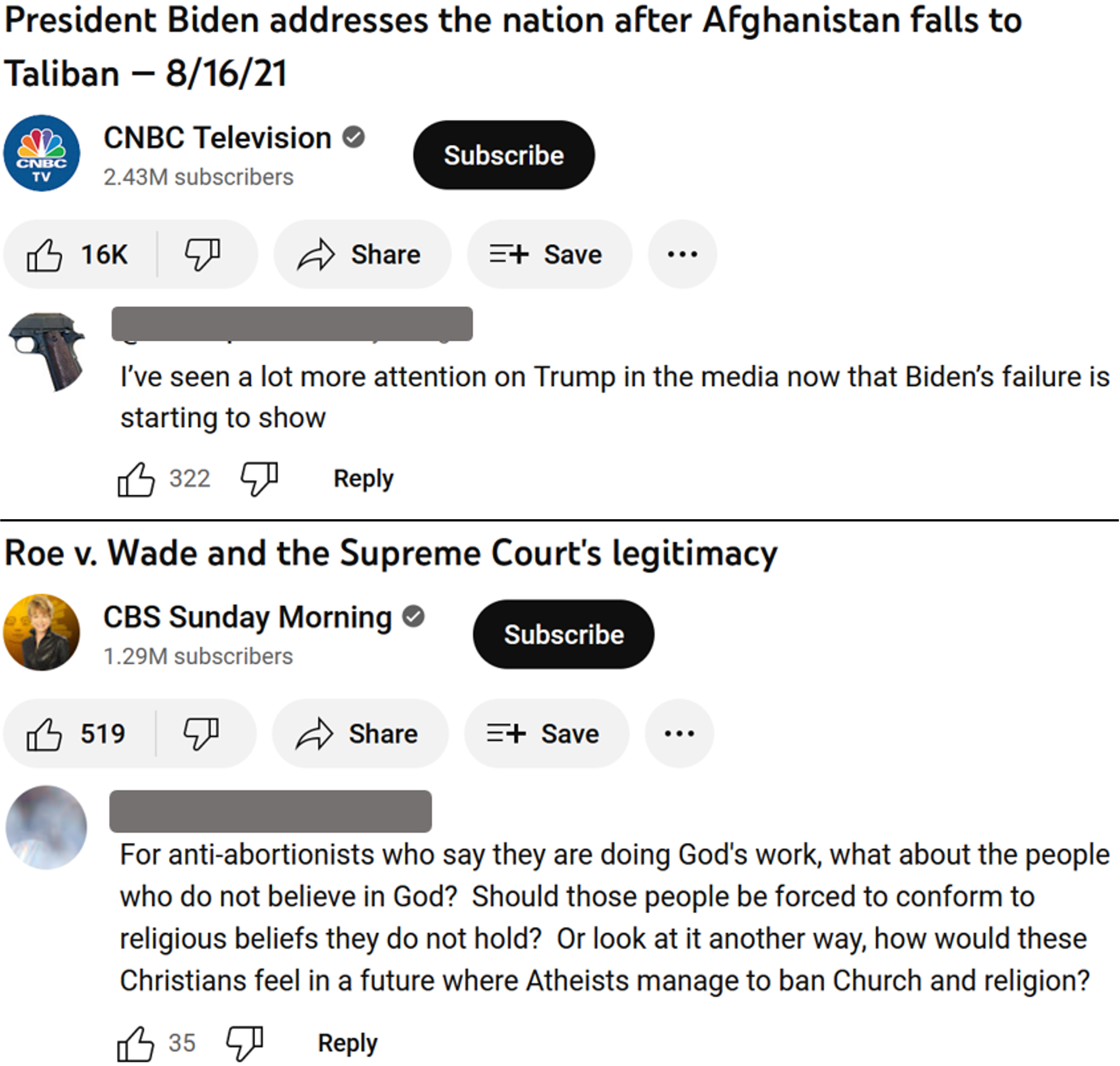}
\caption{\textit{What about} in YouTube comments: implicit use (top) to discredit the source and redirect the topic, and explicit use (bottom) as an attempt toward reasonable argumentation instead of propaganda.\vspace{-4pt}} 
\label{fig:comment-examples}
\end{figure}

\begin{figure*}[!t]
\centering
\includegraphics[width=.99\linewidth, cframe=gray 2pt]{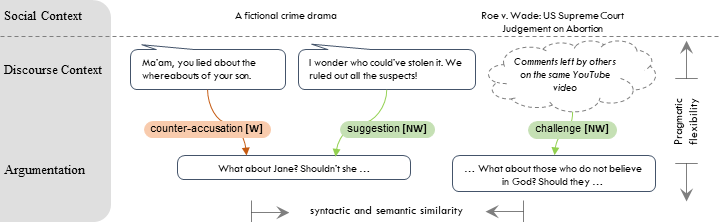}
\caption{Syntactically and semantically similar (or even identical) responses may exhibit extreme \textit{pragmatic flexibility}: being instances of whataboutism [\textsc{\textbf{\textcolor{ForestGreen}{w}}}] or not [\textsc{\textbf{\textcolor{BrickRed}{nw}}}] depending on the discursive context. Furthermore, the latter category may include valid argumentation tools such as suggestion or challenge, instead of propaganda.}
\label{fig:pragmatic-flexibility}
\end{figure*}
Whataboutism drives information disorder\footnote{Following the Council of Europe report by \citet[pp.~10, 20]{wardle2017information}, `information disorder'' includes dis/misinformation (falsehoods with/out the intention to mislead), as well as malinformation (\textit{e.g.}, hate speech).} by derailing proper argumentation, instead of the relatively conspicuous act of misleading by directly lying. As epistemologist~\citet[p.~420]{fallis2015disinformation} argues, information disorder comprises two equally odious functions: (a) \textit{creating false beliefs}, and (b) \textit{preventing the creation of belief in truth}. Empirical efforts target the former through models for fact-checking and fake news detection~\cite{guo2022survey}. Whataboutism, on the other hand, is a prevalent and potent contributor to the latter as well. Given its deleterious effects on discourse and social cohesion, it is thus critical for scalable propaganda detection models to be informed by a deeper understanding of whataboutism so that valid argumentation is not conflated with propaganda. 

This work provides just such a scrutiny: with two new datasets (\S\hspace{1pt}\ref{sec:datasets}), we describe the first rigorous framework for the analysis and detection of whataboutism, distinguishing valid argumentation from propagandist attempts to derail a narrative. We find that instances of whataboutism, redirection, and \textit{tu quoque} fallacies do not coincide in contemporary social media. Contrary to suppositions of prior work on propaganda detection, our analysis reveals that whataboutism serves purposes other than propaganda. Further, we demonstrate that a habitual application of transfer learning or contrastive learning are remarkably poor at detecting whataboutism.\footnote{Identical methods (see \S\hspace{1pt}\ref{sec:experiments}) have shown state-of-the-art results in various downstream tasks, notably computational propaganda detection. This clear disparity underscores the importance of recognizing whataboutism as overlapping with propaganda, yet distinct from it (as shown in Fig.~\ref{fig:pragmatic-flexibility}).}
To overcome this, we describe a novel method to mine negative samples using attention weights instead of contrast using cosine similarity (\S\hspace{1pt}\ref{sec:experiments}). With this new formulation of (dis)\,similarity achieving superior results (\S\hspace{1pt}\ref{sec:analysis}), our work holds implications for understanding complex discourse surrounding misinformation and propaganda, and modeling natural language pragmatics.

\section{Pragmatic flexibility}
\paragraph{``What about'' $\centernot\Leftrightarrow$ whataboutism:}
We utilize the theory of \textit{structured meaning approach}~\cite{krifka2001structured} for a deeper examination beyond early datasets that appear to have conflated the two to a large extent~\cite{dasanmartino2020semeval}. Studies of the ``what about'' construct demonstrate extremely high semantic and pragmatic complexity, but has received little empirical attention~\cite{bledin2021topicality}. It can be used with a broad range of syntactic constituents, yielding diverse semantics. Further, it has often been framed in terms of an antecedent \textit{question under discussion} (QUD)~\cite{roberts2012information}, wherein a diversion is achieved by establishing the prejacent as a referential topic while simultaneously raising a new question about its properties~\cite{ebert2014unified}. Thus, it is not the bigram ``what about'', but the nature of the newly raised question, which determines whether whataboutism is being introduced in a discourse. To wit, Fig.~\ref{fig:comment-examples} depicts valid argumentation employing this phrase, while an instance of whataboutism does not.

Whether ``what about'' is explicit or not, it can suggest a solution, challenge a statement, add specification to prior discussion, or laterally redirect the QUD~\cite{beaver2017questions, bledin2020resistance}. This extreme pragmatic flexibility makes it difficult to \textit{distinguish valid argumentation from propagandist use}. The problem is further compounded by the fact that the \textit{pragmatic distinctions coexist with lexical and semantic similarity}. Fig.~\ref{fig:pragmatic-flexibility} illustrates how the same text may serve as (i) harmful deflection via accusation or diversion, which are characteristics of propaganda, and (ii) accommodation or ``centering'' of attention.\footnote{The former characterize propaganda, while the latter is a valid act that, incidentally, has immense utility in NLP tasks like anaphora resolution~\cite{grosz1983providing, dekker1994predicate}.}
Thus, strategies relying solely on syntax or semantics may not differentiate these opposing discourse maneuvers.

\section{The Datasets}\label{sec:datasets}
\citet{dasanmartino2019fine} curated a corpus for propaganda detection, containing 76 instances of whataboutism, which serves as a valuable foundation for the study of propaganda. However, its size and assumption (that every occurrence is a propagandist use) limit its suitability for our study. We introduce the TQ$^{\plus}$ collections,\footnote{To highlight that modern use of whataboutism surpasses the classical \textit{tu quoque} fallacy, our datasets are dubbed TQ$^{\plus}$, with platform-specific subscripts serving as mnemonic aids.} encompassing the social discourse around each instance. We hope this dataset, featuring ten times more labeled instances than the previous corpus, will spur deeper analyses of fallacies and inspire further research on the role of pragmatics in propaganda and misinformation.

\textbf{YouTube comments} have not received much attention in empirical studies. Our first collection comprises YouTube comments due to a balanced user distribution across demographics~\cite{statista2022gender, statista2022nations2, statista2022nations1, statista2022users} and a less regulated environment, fostering diverse opinions~\cite{mejova2012political}. Targeting socially divisive topics prone to whataboutism, we formulate search queries for six such topics on YouTube. From the top five most viewed videos per query, we collect English comments along with the title, transcript, the number of up-votes, and the publisher information. This corpus, TQ$^{\plus}_{\textsc{yt}}$, comprises 1,642 labeled comments from 17 videos across 6 topics, sorted by up-votes.

Similar to TQ$^{\plus}_{\textsc{yt}}$, our \textbf{Twitter dataset} TQ$^{\plus}_{\textsc{tw}}$ focuses on socially relevant topics that tend to elicit strong emotional responses. We gather English tweets and their replies 8 such topics. The original tweet offers the pragmatic context for identifying whataboutism in responses. While six topics overlap with TQ$^{\plus}_{\textsc{yt}}$, the other two contain fewer tweets, so that our corpus can be used to analyze model performance on topics with limited data.
For each topic, we gather tweets with significant engagement, filtering out threads with less than 200 messages. Within each thread, we exclude messages that do not directly respond to the original tweet, lack opinions or discursive content (\textit{e.g.}, ``wow''), are socially too inappropriate, consist solely of emojis/emotions, or contain images/videos. In TQ$^{\plus}_{\textsc{tw}}$, each datum comprises a tweet-reply pair, and the collection consists of 1,202 messages.

We use stratified partitioning to divide both datasets into training, validation, and testing sections, ensuring an even distribution of comments across topics in each. Within each topic, comments were split into 80\% training, 5\% validation, and 15\% testing, maintaining the class distribution throughout. These stratified segments are then combined across topics to form the final training, validation, and test sets. The distribution of labeled data across these sets, as well as the class-wise distributions, are described in Appendix~\ref{appendix:datasheet}.



We enlist three annotators with native fluency in English who independently assign binary labels to each comment.\footnote{\citet{shoukri2003measures} suggests no additional benefits from more than three annotators for a dichotomous variable.} To 
attain an understanding of the broader sociopolitical discourse pertaining to each topic, annotators dedicate considerable time to carefully review the entire YouTube video or read the complete Twitter conversation before labeling each instance. This meticulous process is expected to ensure diligent annotation and achieve a high level of data fidelity, despite limiting the size of the datasets. The final label is determined by majority vote, with inter-annotator agreements measured by Fleiss' kappa~\cite{fleiss1973} at $\kappa = 0.65$ (TQ$^{\plus}_{\textsc{yt}}$) and $\kappa = 0.75$ (TQ$^{\plus}_{\textsc{tw}}$).\footnote{There is no single threshold for a good value of $\kappa$. The widely used interpretations introduced by \citet{landis1977} regard these scores as ``substantial agreement''.}


Following recent recommendations \cite{gebru2018datasheets, bender2018data}, we include two comprehensive transparency artifacts:
\begin{itemize*}
\item[(a)] the data statement (Appendix~\ref{appendix:datasheet}) and
\item[(b)] the annotation guide/codebook, included in our data repository.
\end{itemize*}

\begingroup
\renewcommand{\arraystretch}{0.75}
\setlength\tabcolsep{6pt}
\begin{table*}[!t]
\centering
{\footnotesize
\begin{tabularx}{\linewidth}{@{}p{.34\linewidth} X c c c@{}}
\toprule
\multicolumn{1}{c}{\textbf{Original context}} & \multicolumn{1}{c}{\textbf{Comment}} & \textbf{\textsc{w}} & \textbf{$\boldsymbol{\Rightarrow}$} & \textbf{\textsc{tq}} \\
\midrule
\multirow[t]{2}{=}{YT Video: One dead after `Unite the Right' rally in Virginia (Fox News)} & (1) if such things happened in china and russia, what would cnn and nytimes say? & \checkmark & \checkmark & \smallskip \\
  & (2) If the perp was an Islamist, imagine how the left would dismiss what he did as not being indicative of Islam (like they always do). & \checkmark & \checkmark & \smallskip \\
\noalign{\vskip 0.8ex}
 YT Video: Outrage grows over Russian bounties (ABC News) & (3) They did the same thing to indigenous folks for bounty hunters and military, the US military deserves this. & & & \checkmark \smallskip \\
\noalign{\vskip 0.8ex}
 YT Video: Russians Flee Into Exile Because Of Putin's War With Ukraine: NYT (MSNBC) & (4) Fox News, Republicans and Trump all defended this monster. They are now hoping and praying that you forget that part. & & \checkmark & \smallskip \\
\noalign{\vskip 0.8ex}
 \multirow[t]{2}{=}{YT Video: Russia is aware it is not winning the war in Ukraine: Brookings senior fellow (CNBC)} & (5) Russia bombing buildings four miles from Kyiv's center doesn't sound like losing. How close have the Ukrainians gotten to Moscow? & \checkmark & & \checkmark \smallskip \\
   & (6) USA baby not didn't war crime afganistan Iraq & \checkmark & \checkmark & \checkmark \smallskip \\
\noalign{\vskip 0.8ex}
 \multirow[t]{3}{=}{Tweet: We united the world to protect Ukraine, we will unite the world to restore justice. Russian invaders will be legally and fairly held to account for all war crimes. The terrorist state will be held to account for the crime of aggression. ($@$ZelenskyyUa; Mar 3, 2023)}
   & (7) Coming from a govt that congratulated Bola Tinubu of Nigeria. Funny people & \checkmark & \checkmark &  \smallskip \\
   & (8) And what of Ukraine's war crimes? Torture and murder of POWs, firing from civilian positions, using human shields or shooting civilians for receiving Russian ration packs. ALL war crimes need to be prosecuted, not just those by Russians. & \checkmark & \checkmark & \checkmark \medskip\\
\bottomrule
\end{tabularx}
}
\caption{Comments and their corresponding contexts (YouTube video title, or tweet to which they are responding) along with three facets: whataboutism (\textsc{\textbf{w}}), topic redirection ($\boldsymbol{\Rightarrow}$), and conformity with classical \textit{tu quoque} (\textsc{\textbf{tq}}).}
\label{tab:thematic-examples}
\end{table*}
\endgroup

\subsection{A thematic scrutiny}\label{ssec:thematic-scrutiny}
Inspired by the wide adoption of thematic analysis in qualitative research~\cite{braun2006thematic, guest2011thematic}, we offer a qualitative discussion to uncover recurring themes in our data. It stands not as an extension, but in complement, to the quantitative study based on corpus annotations. While rare in computational research, thematic analysis has been closely linked to foundational work on word senses and content analysis~\cite{stone1966general, litkowski1997desiderata, ide1998introduction}. 

\paragraph{Discourse coherence:}
Comment-threads on YouTube videos lack coherent discourse. A comment often serves as a standalone response to the video or earlier comments. This differs from discourse structures in articles, interviews, or debates, where whataboutism has been observed~\cite{putz2016whataboutism, dykstra2020rhetoric}. While Twitter threads also show some deviation, our filtering ensures that TQ$^{\plus}_{\textsc{tw}}$ offers a cleaner statement-response format.


\paragraph{Perceptions, framing, and \textit{Tu Quoque}:}
We see that users seldom employ whataboutism as a defensive tactic to deflect criticism. Instead, it is frequently wielded to preclude the relevant issue by \textit{introducing} an accusation, diverging from the traditional view of whataboutism as a response with a  \textit{counter}-accusation. Users frequently employ whataboutism to shape perceptions, understanding, or evaluation of an issue, particularly through the \textbf{lens of domestic political divides}. Table~\ref{tab:thematic-examples} illustrates this trend: comments on a Fox News video critique liberal media outlets like CNN and the New York Times (1, 2), while criticisms of conservative entities appear under content from liberal organizations (4), and even centrist channels (3).\footnote{The liberal, conservative, or centrist leanings mentioned in this discussion are obtained from \citet{adfontes}.}


Users commonly employ whataboutism to highlight perceived \textbf{selection bias} or \textbf{hypocrisy} in news coverage, reframing overlooked issues in a new context. Assessing whether such reframing constitutes propaganda is complex, given its prevalence even in journalism from reputable organizations.\footnote{For example: (1) Dhanesha, N. \href{http://www.vox.com/the-highlight/23198145/renters-climate-change-solutions}{\textit{Climate fixes are all aimed at property owners. What about renters?}} July 27, 2022. Vox. Accessed: 06.23.2023. (2) Hawkins, A. and Davidson, H. \href{http://www.theguardian.com/technology/2023/apr/12/tiktok-china-apps-national-security-wechat-shein}{\textit{As the west tries to limit TikTok's reach, what about China’s other apps?}} April 12, 2023. The Guardian. Accessed: 01.03.2024.}
Social media users similarly engage in this practice: in (5), a central claim is challenged with what the author views as overlooked evidence;\footnote{Table~\ref{tab:thematic-examples}:(5) implies there are the only possible outcomes, showcasing the false dichotomy fallacy.} and (7) demonstrates the use of whataboutism in an appeal to hypocrisy but the deflection is not a counter-accusation, diverging from the \textit{tu quoque} fallacy.

There are similarities between whataboutism in propaganda, whataboutism in general discourse, and the ``what about'' bigram. Our thematic analysis, however, reveals distinct characteristics, preventing one from implying the other, despite topical redirection. Some comments, like Table~\ref{tab:thematic-examples}:(4), overtly redirect without using whataboutism, while others, like (5), attempt to reframe the same topic.
Furthermore, contemporary whataboutism differs from the classical fallacy as they deflect to entities not directly involved in the conversation. Rather than a two-person interaction, commenters may perceive the video or previous posts as representative of ``tu''. For instance, in Table~\ref{tab:thematic-examples}:(3, 6), they may view the videos as representing the U.S. government and comment on past military actions. However, instances like (2) introduce accusations unrelated to the discourse (``the left''). Twitter examples, like (8), align more closely with classical \textit{tu quoque}, likely due to the direct tweet-response structure.

\begin{figure*}[!t]
\centering
\includegraphics[width=.99\textwidth, cframe=gray 2pt]{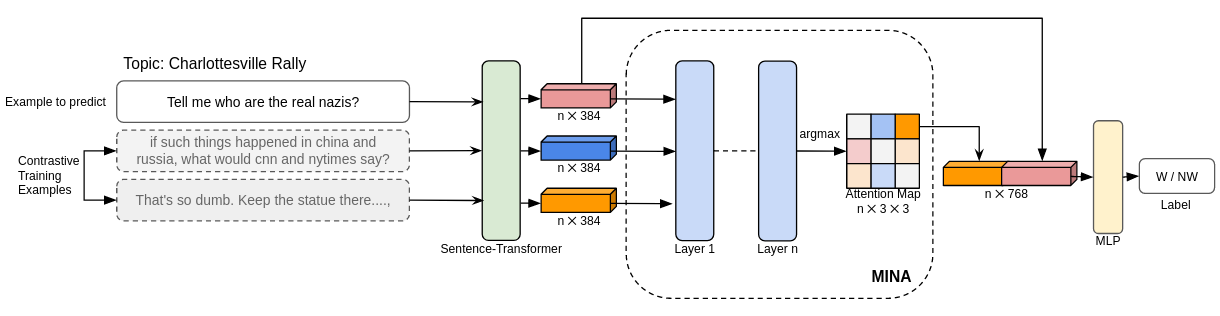}
\caption{MINA (\textbf{Mi}ning \textbf{N}egatives with \textbf{A}ttention) employs attention weights in the final layer of the Transformer encoder as a measure of pragmatic contrast. The complete architecture is shown here \textit{in situ}.}
\label{fig:mina-architecture}
\end{figure*}
\paragraph{Sarcasm and irony:}
Since subjectivity and opinion influence people significantly~\cite{picard2000affective}, we analyze our datasets for subjective expressions and emotive content. We find that explicit sentiment polarity or emotions are rare, while sarcasm or irony abound, as seen in Table~\ref{tab:thematic-examples}:(1, 6, 7).

\section{Experiments}
\label{sec:experiments}
We start with two sets of experiments before describing our novel approach: (i) transfer-learning techniques akin to recent empirical research on propaganda detection, and (ii) possible enhancements to these baselines by incorporating conventional measures of semantic (dis-)\,similarity.

\subsection{Baselines: (Habitual) Transfer learning}
To establish competitive baselines, we look to the best performing models from the SemEval-2020 propaganda detection task~\cite{dasanmartino2020semeval}.\footnote{While \citet{piskorski2023semeval} present a related task, we opt not to use the top models of that task due to: (a) the multilingual corpus; (b) their use of fine-tuned Transformer models (or their ensembles), much like the models in the earlier task; and (c) the poorer English-language whataboutism detection results, compared to the SemEval-2020 task.}
We incorporate a fully connected layer and fine-tune them using the following configurations on a single NVIDIA Titan XP GPU: (i) batch size of 80 for BERT-base and 40 for RoBERTa, (ii) 10 epochs, (iii) maximum sequence length of 256, and (iv) learning rate ($\eta$) set to $1 \times 10^{-4}$.

Further, along the lines adopted by \citet{vlad2019sentence} and \citet{yu2021interpretable}, we include another baseline that integrates affective language with task-specific fine-tuning. Due to our corpus being rich in irony and sarcasm but sparse in explicit expressions of sentiment, we fine-tune a RoBERTa model pretrained on irony detection \cite{barbieri2020tweeteval} with the same training configuration.

\subsection{(Conventional) Semantic similarity}
\label{ssec:experiments-semantic-similarity}
Given the diversionary nature of whataboutism, identifying it hinges on discerning a sentence's relation to the antecedent question under discussion (QUD). For improvements over baseline results, we thus explore models that promise a deeper semantic understanding within a topic's discourse. Given the inadequate performance -- particularly on YouTube comments -- of language models pretrained on token prediction, we adopt SBERT \cite{reimers2019sentence}, which is trained on natural language inference (NLI) datasets \cite{bowman2015large, williams2018broad}, and excels in NLI and semantic textual similarity (STS) benchmarks. 
We fine-tune its embeddings with a configuration identical to that described earlier for BERT.

We also explore two additional contrastive learning methods: (i) ``Correct and Smooth'' (C\&S) \cite{huang2021combining}, which constructs a graph where the embeddings serve as nodes and cosine similarity provides edge weights, and tunes the embeddings via label propagation~\cite{zhou2003learning}; and (ii) SimCSE \cite{gao2021simcse}, which has achieved notable results in STS tasks with supervision from annotated pairs in NLI datasets (with \textit{entailment} and \textit{contradiction} pairs serving as positive and negative samples, respectively).


\begingroup
\renewcommand{\arraystretch}{1.25}
\setlength{\tabcolsep}{3pt}
\begin{table*}[!t]
\begin{minipage}{.43\textwidth}
\small
\centering
\begin{tabularx}{\linewidth}{@{}>{\raggedright\arraybackslash}X c >{\centering\arraybackslash}X >{\centering\arraybackslash}X >{\centering\arraybackslash}X >{\centering\arraybackslash}X >{\centering\arraybackslash}X >{\centering\arraybackslash}X >{\raggedleft\arraybackslash}X}
\multicolumn{8}{c}{(a) Whataboutism detection on YouTube comments}\\
\midrule
Model & \phantom{xxxxxxxx} & \multicolumn{3}{c}{\textbf{W}} & \multicolumn{3}{c}{\textbf{NW}} \\
& & \textbf{P} & \textbf{R} & \textbf{F}\textsubscript{1} & \textbf{P} & \textbf{R} & \textbf{\textit{F}}\textsubscript{1}\\
\midrule
\multicolumn{8}{@{}l}{\textit{Transfer learning (baseline models)}:}\\
\textsc{bert}
  & & \cellcolor[RGB]{20,160,250}{\textcolor{white}{0.72}} & \cellcolor[RGB]{137,181,170}{0.36} & \cellcolor[RGB]{63,168,221}{0.48} & 0.92 & 0.98 & 0.95\\
\textsc{r}o\textsc{bert}a
  & & \cellcolor[RGB]{203,193,125}{0.23} & \cellcolor[RGB]{20,160,250}{\textcolor{white}{0.71}} & \cellcolor[RGB]{124,179,179}{0.34} & 0.94 & 0.64 & 0.76\\
\textsc{r}o\textsc{bert}a\textsubscript{\,Irony}
  & & \cellcolor[RGB]{177,188,143}{0.30} & \cellcolor[RGB]{157,185,157}{0.30} & \cellcolor[RGB]{141,182,168}{0.30} & 0.93 & 0.99 & 0.96\\
\multicolumn{8}{@{}l}{\textit{Based on conventional measures of semantic similarity}:}\\
\textsc{sbert}
  & & \cellcolor[RGB]{240,200,100}{0.13} & \cellcolor[RGB]{240,200,100}{0.05} & \cellcolor[RGB]{240,200,100}{0.07} & 0.96 & 0.99 & 0.97\\
\textsc{c\&s}
  & & \cellcolor[RGB]{121,178,181}{0.45} & \cellcolor[RGB]{107,176,191}{0.45} & \cellcolor[RGB]{76,170,212}{0.45} & 0.91 & 0.91 & 0.91\\
\textsc{s}im\textsc{cse}
  & & \cellcolor[RGB]{188,191,136}{0.27} & \cellcolor[RGB]{30,162,243}{0.68} & \cellcolor[RGB]{106,176,191}{0.38} & 0.94 & 0.72 & 0.82\\
\multicolumn{8}{@{}l}{\textit{Mining negatives with attention} (\textsc{mina}):}\\
\textsc{sbert}
  & & \cellcolor[RGB]{54,166,227}{0.63} & \cellcolor[RGB]{80,171,209}{0.53} & \cellcolor[RGB]{20,160,250}{\textcolor{white}{0.58$^*$}} & 0.94 & 0.96 & 0.95\\
\textsc{r}o\textsc{bert}a\textsubscript{\,Irony}
  & & \cellcolor[RGB]{76,170,212}{0.57} & \cellcolor[RGB]{100,175,195}{0.47} & \cellcolor[RGB]{46,165,232}{0.52} & 0.94 & 0.95 & 0.94\\
\toprule
\end{tabularx}
\end{minipage}\hfill%
\begin{minipage}{.5\textwidth}
\small
\centering
\begin{tabularx}{\linewidth}{@{}>{\raggedright\arraybackslash}X c >{\centering\arraybackslash}X >{\centering\arraybackslash}X >{\centering\arraybackslash}X >{\centering\arraybackslash}X >{\centering\arraybackslash}X >{\centering\arraybackslash}X >{\raggedleft\arraybackslash}X}
\multicolumn{9}{c}{(b) Whataboutism detection on Twitter replies}\\
\midrule
Model & \phantom{xxxxxxxx} & \multicolumn{3}{c}{\textbf{W}} & \multicolumn{3}{c}{\textbf{NW}} & \multicolumn{1}{c}{Params}\\
& & \textbf{P} & \textbf{R} & \textbf{F}\textsubscript{1} & \textbf{P} & \textbf{R} & \textbf{\textit{F}}\textsubscript{1} & ($M$)\\
\midrule
\multicolumn{9}{@{}l}{\textit{Transfer learning (baseline models)}:}\\
\textsc{bert}
  & & \cellcolor[RGB]{181,189,141}{0.64} & \cellcolor[RGB]{109,176,189}{0.76} & \cellcolor[RGB]{103,175,194}{0.69} & 0.85 & 0.76 & 0.80 & 110\\
\textsc{r}o\textsc{bert}a
  & & \cellcolor[RGB]{181,189,141}{0.64} & \cellcolor[RGB]{85,172,205}{0.80} & \cellcolor[RGB]{103,175,194}{0.71} & 0.87 & 0.75 & 0.81 & 124\\
\textsc{r}o\textsc{bert}a\textsubscript{\,Irony}
  & & \cellcolor[RGB]{240,200,100}{0.54} & \cellcolor[RGB]{32,162,242}{\textcolor{white}{0.89}} & \cellcolor[RGB]{130,180,175}{0.67} & 0.91 & 0.57 & 0.70 & 124\\
\multicolumn{9}{@{}l}{\textit{Based on conventional measures of semantic similarity}:}\\
\textsc{sbert}
  & & \cellcolor[RGB]{181,189,141}{0.64} & \cellcolor[RGB]{109,176,189}{0.76} & \cellcolor[RGB]{103,175,194}{0.69} & 0.85 & 0.76 & 0.80 & 22.7\\
\textsc{c\&s}
  & & \cellcolor[RGB]{222,197,112}{0.57} & \cellcolor[RGB]{204,194,124}{0.60} & \cellcolor[RGB]{240,200,100}{0.59} & 0.95 & 0.91 & 0.93 & 22.7\\
\textsc{s}im\textsc{cse}
  & & \cellcolor[RGB]{181,189,141}{0.64} & \cellcolor[RGB]{85,172,205}{0.80} & \cellcolor[RGB]{89,173,203}{0.71} & 0.87 & 0.74 & 0.80 & 22.7\\
\multicolumn{9}{@{}l}{\textit{Mining negatives with attention} (\textsc{mina}):}\\
\textsc{sbert}
  & & \cellcolor[RGB]{127,179,177}{0.73} & \cellcolor[RGB]{97,174,197}{0.78} & \cellcolor[RGB]{20,160,250}{\textcolor{white}{0.75$^*$}} & 0.87 & 0.84 & 0.85 & 31\\
\textsc{r}o\textsc{bert}a\textsubscript{\,Irony}
  & & \cellcolor[RGB]{20,160,250}{\textcolor{white}{0.91}} & \cellcolor[RGB]{222,197,112}{0.57} & \cellcolor[RGB]{89,173,203}{0.70} & 0.54 & 0.89 & 0.67 & 133\\
\toprule
\end{tabularx}
\end{minipage}
\caption{Whataboutism detection results on (a) YouTube and (b) Twitter: baseline results (top) using transfer-learning with fine-tuning; (middle) conventional semantic similarity measures; and (bottom) dis/similarity based on \textbf{\textsc{m}}ining \textbf{\textsc{n}}egative samples with \textbf{\textsc{a}}ttention (\textsc{mina}). Macro-average (\textbf{P})recision, (\textbf{R})ecall, and \textbf{\textit{F}}\textsubscript{1} for the target class are shown on column-wise color gradients, with blue (\sqbox{bestblue}) indicating the best performance and yellow (\sqbox{worstyellow}) the worst.}
\label{tab:results}
\end{table*}
\endgroup
Notwithstanding their proficiency in STS benchmarks, the results of our experiments (\S\hspace{1pt}\ref{sec:analysis}) reveal these models to be inadequate in understanding the pragmatic variations prevalent within each topic's social discourse. A common thread in their methodology is the use of cosine similarity to differentiate between instances from positive and negative samples. Cosine similarity may present challenges in high-dimensional vector spaces, as it becomes highly probable that any two vectors are nearly orthogonal (Appendix~\ref{app:proof} offers a formal proof). Further, as experiments reveal significant room for improvement over these models, we conjecture that approaches relying on cosine measure may struggle to detect whataboutism. This challenge drives us to explore beyond semantic similarity to capture pragmatic differences in modern social media discourse, with empirical results described in \S\hspace{1pt}\ref{sec:analysis}.

\subsection{Mining negatives with attention}
Given the centrality of topical redirection in whataboutism, any representation distinguishing it requires an understanding of the pragmatic context, namely the antecedent QUD. We thus model each comment by training on tuples comprising the comment $t$ (which we seek to classify), along with $c$ comments each from the \textsc{\textcolor{ForestGreen}{w}} and \textsc{\textcolor{BrickRed}{nw}} classes. For our collection of YouTube comments (TQ$^{\plus}_{\textsc{yt}}$), the 2$c$ comments are selected from the same video as $t$; and for TQ$^{\plus}_{\textsc{tw}}$, from the same thread as the original tweet. This hyperparameter $c$ controls context incorporation during training. We reason that such tuples can better capture semantic redirection \textit{within the ambient context of the social topic}, than embeddings based solely on global distributional semantics~\cite{lenci2022}. With this setup\footnote{The language model $m$, which provides the representation of $t$ and contextual comments, can be viewed as a discrete parameter of the complete approach, where one model may be swapped out for another.}, the most dissimilar examples can be identified using a context tuple. Then, combining them with a representation of $t$ will encode pragmatic shifts through an implicit \textit{grounding of the text in the surrounding social discourse}.\footnote{In effect, we surmise that the pragmatic act is grounded not just in the context of immediately surrounding texts, but in a sampling of the entire social discourse about that topic. This is similar in spirit to visually grounding objects through cross-modal attention \cite{ilinykh2022attention}.}

Specifically, we apply a Transformer encoder with $d$ layers and $h$ attention heads \cite{vaswani2017attention} to incorporate cross-attention between the elements of the context tuple. We then extract the cross-attention map from the encoder's final layer for a similarity matrix. After identifying the example with the highest numeric value, we concatenate its text with $t$. This combined embedding is fed into a multi-layer perceptron for final classification. Termed \textbf{\textit{mi}}ning \textbf{\textit{n}}egatives with \textbf{\textit{a}}ttention (\textsc{mina}), this method employs cross-attention scores to mine the most pragmatically dissimilar examples in the discourse surrounding $t$. Fig.~\ref{fig:mina-architecture} is an \textit{in situ} illustration of its complete architecture. Ablation studies (\S\ref{sec:analysis}) dictate our chosen configuration of 2 encoder layers, 32 attention heads, SBERT embeddings as input, and a context size of 1.



\section{Results and analysis}\label{sec:analysis}
Table~\ref{tab:results} displays the results of baseline transfer-learning on pretrained Transformer-based models, models based on semantic similarity, and \textsc{mina}.

\paragraph{Baseline models:}
The baseline models perform reasonably well on Twitter, but not on TQ$^{\plus}_{\textsc{yt}}$: BERT achieves higher precision but lower recall, while RoBERTa shows the opposite behavior. It is thus evident that models adept at propaganda detection have room for improvement in identifying whataboutism, especially given their tendency to assign the majority label \textsc{nw} to test instances. 

\paragraph{Semantic similarity models:}
When applying the standard transfer-learning technique (\textit{i.e.}, fine-tuning on task-specific labeled data) to language models known for their excellent performance on various semantic similarity tasks, we find no consistent improvement over baseline models. Once again, all models display a strong inclination to classify TQ$^{\plus}_{\textsc{yt}}$ test instances into the majority class, with SBERT exhibiting the poorest performance in this regard. SimCSE, trained on hard negative samples from NLI datasets, shows relatively better performance, while the simplest model in this category, C\&S, achieves the highest $F_1$ score of 0.45. On TQ$^{\plus}_{\textsc{tw}}$, the results are consistently better, with SBERT in particular showing remarkable improvement. But these models fail to notably surpass the baseline transfer-learning methods. 

\begin{figure}[!t]
\centering
\begin{subfigure}[t]{.49\linewidth}
\centering
\includegraphics[width=.98\linewidth, cframe=gray 2pt]{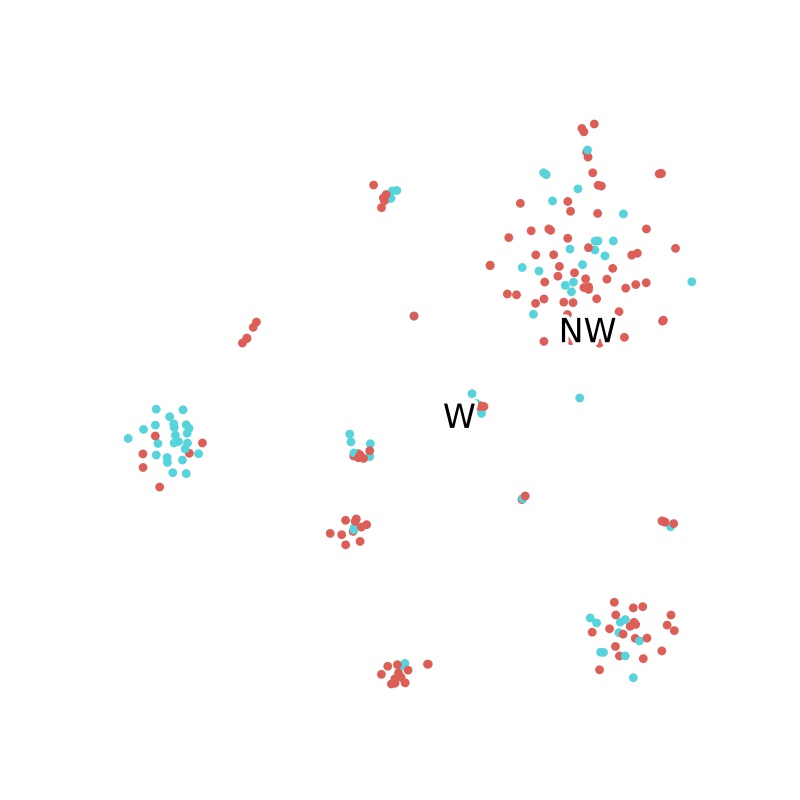}
\caption{Transfer-learning from pretrained language models.}
\label{fig:tsne1}
\end{subfigure}%
\hfill
\begin{subfigure}[t]{.49\linewidth}
\centering
\includegraphics[width=.98\linewidth, cframe=gray 2pt]{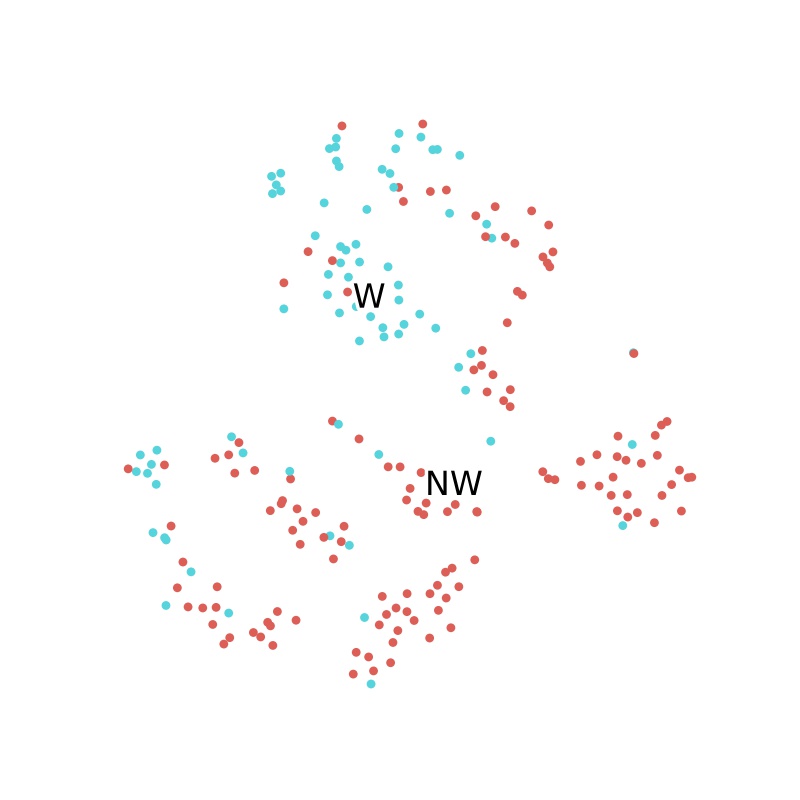}
\caption{Contrasting with MINA to capture pragmatic differences.}
\label{fig:tsne2}
\end{subfigure}
\caption{t-SNE visualization of the target class (\textsc{w}; blue), amid everything else (\textsc{nw}; red) with (a) transfer-learning with pretrained language models, and (b) mining negative samples using \textsc{mina}. The latter demonstrates better separability of the target class.}
\label{fig:tsne}
\end{figure}
\paragraph{Mining negatives with attention (\textsc{mina}):}
Even while employing contextual embeddings, the inability of the above models to effectively detect whataboutism suggests a limitation in capturing the kind of pragmatic nuances illustrated earlier (Fig.~\ref{fig:comment-examples}, \ref{fig:pragmatic-flexibility}). \textsc{mina}, however, is designed to leverage cross-attention scores across comments within a topic to model pragmatic context. To demonstrate its efficacy, we assess the \textit{worst}-performing models from previous categories, focusing on $F_1$ for the target class. On TQ$^{\plus}_{\textsc{yt}}$, $F_1$ of RoBERTa$_{\text{\,Irony}}$ increases by 22\%, rising from 0.3 to 0.52, while SBERT shows an exceptional improvement of 51\%, from 0.07 to 0.58: a 10\% boost over the next best result.\footnote{To compare, the SemEval-2020 task on propaganda detection~\cite{dasanmartino2020semeval} saw the best performance in detecting whataboutism, straw man arguments, and red herrings, achieve $F_1$ = 0.269. (these three closely-related fallacies were merged into a single group due to insufficient data.} When evaluated on TQ$^{\plus}_{\textsc{tw}}$, SBERT with \textsc{mina} outperforms other models, showing a 6\% improvement over vanilla SBERT and a 4\% improvement over RoBERTa and SimCSE. RoBERTa$_{\text{\,Irony}}$ initially exhibits high recall but low precision due to being predisposed to irony. However, \textsc{mina} often flips the output when irony is present in comments but not in other context tuple elements. The significant improvements we observe with \textsc{mina} are reflected in its effects on target class separability. Comparing the t-SNE \cite{hinton2002stochastic} visualizations in Fig.~\ref{fig:tsne1} and Fig.~\ref{fig:tsne2}, it becomes clear that while fine-tuning on a moderate amount of task-specific training data has limited utility, contrasting with \textsc{mina} improves the ability to distinguish whataboutism from everything else.



\begingroup
\begin{table}[!t]
\setlength\tabcolsep{2pt}
\small
\centering
\begin{tabularx}{\linewidth}{@{}>{\raggedleft\arraybackslash}X >{\raggedleft\arraybackslash}X >{\raggedleft\arraybackslash}X}
\toprule
\multicolumn{3}{@{}p{\linewidth}}{(a) \textit{Varying $h$, with $d=4, c=1$} ($\sigma^2 = 0.0086$):}\\
$h$ (attention heads) & $\overline{F_1}$ (YouTube) & $\overline{F_1}$ (Twitter)\smallskip\\
32  & 0.52 & 0.71\\
64  & 0.52 & 0.70\\
128 & 0.51 & 0.70\\
384 & 0.48 & 0.69\smallskip\\
\end{tabularx}

\begin{tabularx}{\linewidth}{>{\raggedleft\arraybackslash}X >{\raggedleft\arraybackslash}X >{\raggedleft\arraybackslash}X}
\multicolumn{3}{@{}p{\linewidth}}{(b) \textit{Varying $c$, with $d=4, h=32$} ($\sigma^2 = 0.0072$):}\\
$c$ (context size) & $\overline{F_1}$ (YouTube) & $\overline{F_1}$ (Twitter)\smallskip\\
1 & 0.51 & 0.71\\
2 & 0.50 & 0.70\\
3 & 0.51 & 0.67\\
4 & 0.49 & 0.69\smallskip\\
\end{tabularx}

\begin{tabularx}{\linewidth}{>{\raggedleft\arraybackslash}X >{\raggedleft\arraybackslash}X >{\raggedleft\arraybackslash}X}
\multicolumn{3}{@{}p{\linewidth}}{(c) \textit{Varying $m$, with $d=4, h=32, c=1$} ($\sigma^2 = 0.0089$):}\\
$m$ (language model) & $\overline{F_1}$ (YouTube) & $\overline{F_1}$ (Twitter)\smallskip\\
\textsc{sbert} & 0.52 & 0.71\\
\textsc{r}o\textsc{bert}a\textsubscript{\,Irony} & 0.51 & 0.70\\
\bottomrule
\end{tabularx}
\caption{Ablation study on \textsc{mina} hyperparameters: $d$ (encoder layers), $h$ (attention heads), $c$ (context size), and $m$ (input embeddings). Mean $F_1$ and its variance $\sigma^2$ are reported over 50 runs.}
\label{tab:ablations}
\end{table}
\endgroup
\paragraph{Ablation experiments:}
To determine the optimal configuration for \textsc{mina}, we conduct a series of experiments varying hyperparameters $d$ (number of encoder layers), $h$ (number of attention heads), $d$ (number of comments for context size), and $m$ (language model for input embeddings) (see Table~\ref{tab:ablations}). The models are trained end-to-end with the configuration described earlier (\S\hspace{1pt}\ref{sec:experiments}). Each experiment varies one hyperparameter while keeping others constant. Due to random selection of comments in context tuples, each ablation undergoes 50 runs, with mean $F_1$ and variance reported. The best hyperparameter configuration and its best run informs the results in Table~\ref{tab:results}.\footnote{For reproducibility, context tuples and model weights from the best run are made available with our code and data.}

The results indicate no benefit in increasing the encoder layers, attention heads, and context size beyond 4, 32, and 2, respectively. For the model $m$, we use SBERT and RoBERTa$_\text{\,Irony}$. Across 50 trials, SBERT yields marginally higher average $F_1$ scores for YouTube (0.52, against 0.51 for RoBERTa$_\text{\,Irony}$) and Twitter (0.71, against 0.70 for RoBERTa$_\text{\,Irony}$). The variance across all 50 trials for selecting the embeddings was very low, at $\sigma^2 = 0.0089$.

\begingroup
\begin{table}[!t]
\setlength\tabcolsep{8pt}
{\small
\begin{tabularx}{\linewidth}{>{\raggedright\arraybackslash}X >{\centering\arraybackslash}X >{\centering\arraybackslash}X}
Method & $\overline{F_1}$ (YouTube) & $\overline{F_1}$ (Twitter) \\
\noalign{\vskip 0.8ex}
MINA & \textbf{0.52} & \textbf{0.71} \\
Random & 0.50 & 0.67 \\
Cosine-Sim & 0.50 & 0.67\\
\bottomrule
\end{tabularx}
}
\caption{\textsc{mina}'s novel approach using cross-attention weights to construct pragmatically dissimilar contexts is a superior sample mining strategy.}\vspace{-11pt}
\label{tab:mining-comparison}
\end{table}
\endgroup
\paragraph{A comparison of mining strategies:}
We compare \textsc{mina}'s negative sample mining with two common strategies in recent contrastive mining literature: cosine similarity-based mining (\textit{e.g.}, \citealp{wang2021dense}) and random sampling \cite{jiang2021improving, xu2022negative}. Table~\ref{tab:mining-comparison} shows \textsc{mina}'s superior result, while the use of cosine measure is akin to random sampling for tuple construction. This further supports our conjecture that conventional measures of semantic similarity or contrast are inadequate for capturing pragmatic differences.

\paragraph{Challenges, negative results, and insights:}
In exploring several models and techniques to detect whataboutism, the path to \textsc{mina}, which uses negative sampling inspired by cross-attention, was marked by several unexpected outcomes. First, we experimented with the advanced pre-trained encoder DeBERTa \cite{he2021deberta} on the collection of YouTube comments, but its performance was unexpectedly inferior to BERT, achieving an $F_1$ score of 0.455 compared to BERT's 0.48.

Although \textit{modern large language models} (LLMs) like Llama-2-7b, Llama-2-13b \cite{touvron2023Llama}, and GPT-3.5-turbo \cite{brown2020gpt} have shown impressive capabilities in various tasks, they did not yield satisfactory results in detecting whataboutism. The limitations we observe align with those noted by \citet{ruiz2023detecting} in detecting fallacious argumentation. On Twitter comments, for instance, the highest $F_1$ score was 0.65, 4\% lower than the BERT baseline (Table~\ref{tab:results}). Even with sophisticated contextual prompting, no significant improvements were observed. GPT-3.5-turbo excelled with canonical whataboutism that closely mirrored syntactic patterns found in popular media, but struggled with nuanced, rephrased instances. Consequently, these LLMs were excluded from subsequent stages of experimentation in this work.

Our experiments included \textit{incorporating video transcripts as contextual information}, for which we leveraged Longformer \cite{beltagy2020longformer}. Contrary to initial expectations, these yielded lower precision and recall across all models. Analyzing the errors revealed that comments from other users are a better predictor than the contextual information derived from the video. This observation, plus our thematic analysis (\S\hspace{1pt}\ref{ssec:thematic-scrutiny}) and the success of \textsc{mina} strongly indicate that while the video remains important in setting the topic, detecting whataboutism in a comment often depends more on the broader social discourse surrounding the topic.

Lastly, we decided against \textit{batch-contrastive learning} due to two potential concerns raised by ~\citet{khosla2020contrastive} and \citet{qu2021coda}:
\begin{itemize*}
\item[(i)] large batch sizes are vital for it to be effective, and
\item[(ii)] it typically involves augmentation, which introduces noise that hinders pragmatic understanding.
\end{itemize*}
Further, it would still use cosine similarity on its own, which we have questioned in \S\hspace{1pt}\ref{ssec:experiments-semantic-similarity}.

\section{Related Work}

\paragraph{Propaganda detection:}
Early propaganda detection~\cite{barroncadeno2019proppy} categorized sources based on external inputs, but concerns were raised due to noisy data and questions about epistemic bias~\cite{uscinski2013epistemology, marietta2015fact}. Traditionally, whataboutism was viewed as the tu quoque fallacy~\cite{fischer2021}. Another body of work, however, views it in the broader pragmatic and dialectic framework~\cite{aikin2008}. While this has gained prominence in theoretical studies~\cite{bowell2023}, computational approaches remain confined to viewing whataboutism purely as propaganda~\cite{dasanmartino2019fine, sahai2021breaking, baleato2023paper}. Our work connects to the pragma-dialectic view of whataboutism: overlapping with propaganda but not subsumed by it.

NLP tasks have traditionally favored utilizing pretrained models like BERT~\cite{devlin2019bert} and RoBERTa~\cite{liu2019roberta}, fine-tuning them for specific tasks. \citet{dasanmartino2019fine} accordingly approached propaganda detection, creating a corpus of news articles later used in shared tasks~\cite{dasanmartino2019findings, dasanmartino2020semeval}, where these pretrained models achieved top results~\cite{yoosuf2019fine, morio2020hitachi}. 
While related, our research differs substantially. They target propaganda detection, treating whataboutism only as propagandist language while the limited dataset size precludes deeper analysis. 
\citet{dasanmartino2020semeval} nevertheless note the difficulty in identifying whataboutism. 

Since identifying the precise textual span of propaganda is laborious and error-prone, we look to \citet{sahai2021breaking}, who label informal fallacies in social media comments. However, they continue to treat whataboutism purely as propaganda, while also turning to transfer learning through fine-tuning the models (\textit{i.e.}, the approach we use as baselines).

\paragraph{Framing and persuasion:} Unlike earlier studies, \citet{piskorski2023semeval} view whataboutism as strategic framing and persuasion rather than outright propaganda, thus offering a taxonomy that aligns more closely with our study. However, instances of whataboutism are sparse in their multilingual corpus (only 25 in English), limiting its utility in our work. Additionally, fine-tuned Transformer models or ensembles, as utilized by top-performing participants, yielded remarkably low $F_1$ scores for whataboutism and other related categories 
(see \citealp{semeval2023sheffield}). The results of this shared task are noteworthy, yet they are confounded by ambiguity~\cite{semeval2023apatt, semeval2023nlubot} and low inter-annotator agreement. Addressing these challenges is crucial for an accurate interpretation or analysis of the findings.


\paragraph{Modeling (dis)\,similarity.}
Whataboutism, a discursive maneuver for diversion and centering, relates to semantic textual similarity (STS)~\cite{cer2017semeval}. Modeled as a graph, STS becomes a node classification task using graph neural networks (GNN). \citet{huang2021combining}, however, show that minor modifications to classical graph-based learning algorithms (\textit{e.g.},~\citealp{zhou2003learning}) outperform large GNNs despite far fewer parameters and less training time. Their approach (C\&S) does not require careful early stopping criteria or a large validation set, making it another competitive baseline.

STS models do not capture whataboutism's pragmatic complexity. We address this by sharing a core intuition with contrastive learning: bring similar instances closer while pushing apart dissimilar ones~\cite{weinberger2009distance}. Augmentation methods used in contrastive learning, however, are ill-suited for pragmatics. 
Instead, we opt for SimCSE~\cite{gao2021simcse}. While hard negative mining has proven effective~\cite{xiong2021approximate}, the \textit{use of attention weights to model pragmatic differences} is a novel contribution of this work. Here, notable work includes~\citet{yuquan2018acv}, who integrate attention weights with syntax, and \citet{yamagiwa2022improving}, who fuse self-attention matrices with word-mover’s distance to obtain promising STS results.

\section{Conclusion}
Our study underscores areas where state-of-the-art propaganda detection models often fail to distinguish propagandist use of whataboutism from valid argumentation tools or figurative language use like irony, revealing how their suitability for this task may be improved. Addressing this within the pragma-dialectic framework, we illustrate the need to develop models capable of understanding pragmatic variations. We find traditional similarity measures ineffective for this purpose. Cross-attention proves valuable instead. We thus propose a novel methodology, \textsc{mina}, for mining negative samples based on cross-attention, which achieves superior results in whataboutism detection and suggests its utility in other tasks that require grounding in ambient social discourse. This study is based on our contribution of two annotated datasets that can facilitate further research in related areas.

\section{Limitations}
Emotive topics may influence the labels assigned to specific instances. We report Fleiss' kappa, adjusting for the possibility of chance agreement, but our approach does not take into account individual viewpoints. We thus advocate imbibing this into \textit{perspectivist} research~\cite{cabitza2023toward}.

We have shared model weights, code, and datasets for replicability. \textsc{mina}'s reliance on randomized tuple batches, however, may cause slight changes in retrained models in spite of extremely low variance (reported in Table~\ref{tab:ablations}).

The insights regarding the underwhelming performance of modern LLMs are crucial for understanding their limitations in handling tasks like whataboutism detection, where pragmatic nuances are important. In this regard, a deeper exploration of in-context learning is warranted.

Finally, we note that the TQ$^{\plus}$ collections are imbalanced, with a minority target class. This, however, mirrors real-world scenarios and may not be a disadvantage after all.

\bibliography{references}

\clearpage
\setcounter{footnote}{0}
\appendix
\begin{figure*}[!htp]
\begin{subfigure}[h]{0.49\textwidth}
\includegraphics[trim=5.25cm 1cm 11.45cm 11cm, clip=true, width=\linewidth, height=6.4cm]{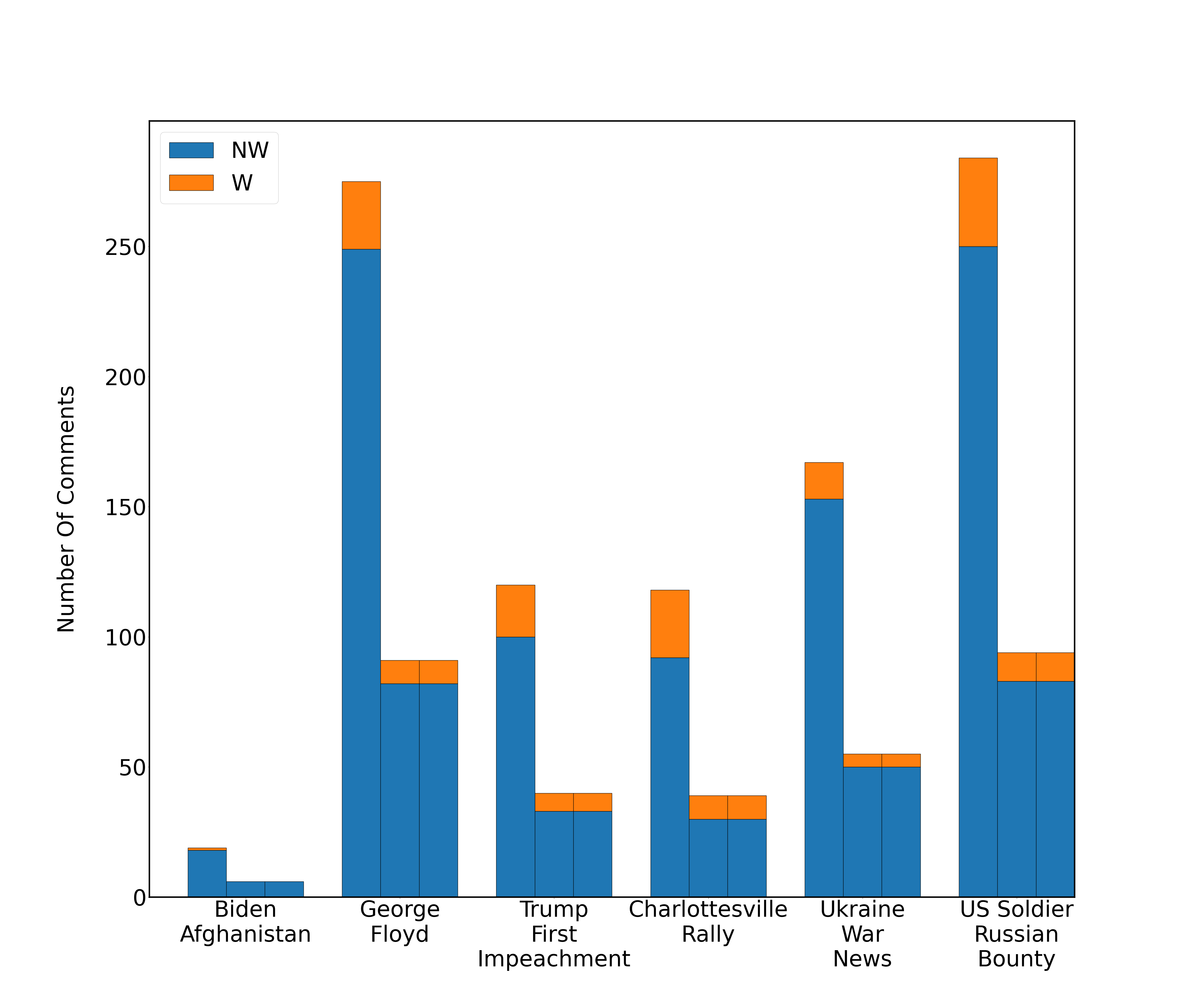}
\caption{TQ$^{\plus}_{\textsc{yt}}$}
\label{fig:dataset-statistics-yt}
\end{subfigure}%
\hfill
\begin{subfigure}[h]{0.49\textwidth}
\includegraphics[trim=5.25cm 1cm 11.45cm 11cm, clip=true, width=\linewidth, height=6.4cm]{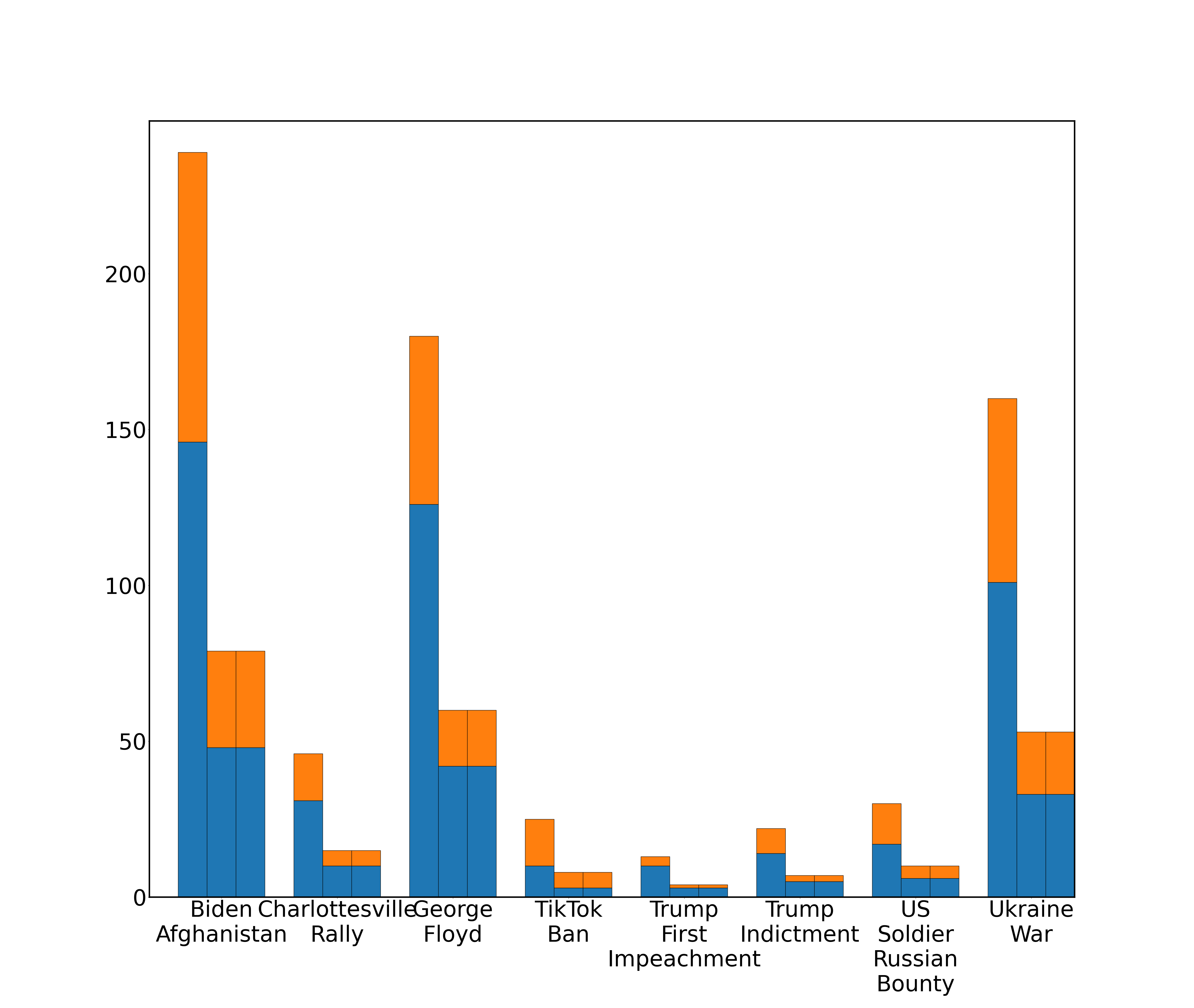}
\caption{TQ$^{\plus}_{\textsc{tw}}$}
\label{fig:dataset-statistics-twitter}
\end{subfigure}%
\caption{Labeled data distribution, spanning six divisive sociopolitical topics for the collection of YouTube comments, and eight such topics for the collection of Twitter responses. Each topic is partitioned into three sets (training, validation, and test), each with comments labeled as whataboutism (\textsc{w}) or not (\textsc{nw}). 
}
\label{fig:dataset-statistics}
\end{figure*}


\section{Datasheet}
\label{appendix:datasheet}
This datasheet is included based on the recommendations of \citet{gebru2018datasheets}. It serves to document the creation, composition, intended uses, and maintenance of the TQ\textsuperscript{+} dataset (``tu quoque and beyond'') released with this work. We hope this will facilitate its better usage, and further encourage transparency, accountability, and reproducibility.\footnote{For the sake of brevity, we have not included information that is included in the main body of this paper.}

\paragraph{Why was the dataset created?}
TQ\textsuperscript{+} was created to enable research on identifying expressions of whataboutism in social media posts made by active participating users: given an English (en-US) comment made in the specific context, identify whether the comment expresses whataboutism. Intentionally created for this task, the dataset comprises of several hot-button sociopolitical topics, where whataboutism is more likely to be employed. While there exist a few datasets for propaganda detection~\cite{dasanmartino2019findings}, there is no corpus exclusively devoted to the study of whataboutism, which has unique sociopolitical and linguistic properties (ranging from semantics to pragmatics) that are distinct from other propagandist maneuvers. Furthermore, many who employ whataboutism are neither the creators nor the intentional disseminators of propaganda. Instead, many instances are attempts made by users to challenge or re-center a narrative (better captured in the taxonomic changes seen in \citet{piskorski2023semeval}). These factors distinguish TQ\textsuperscript{+} from earlier datasets.

\paragraph{What other tasks could the dataset be used for?}
TQ\textsuperscript{+} can be used for various types of modeling or comprehension of whataboutism in social media commentary. For instance, one may study syntactic patterns and their correlation with expressions of whataboutism, or how other users react to comments that express whataboutism. It can also be used to learn from adversarial settings, wherein this dataset can be used to automatically generate whataboutism on politically divisive topics. Furthermore, we expect TQ\textsuperscript{+} to be useful for discourse analyses and other studies conducted by social scientists and media communication researchers.

\paragraph{Has the dataset been used for any tasks already?}
This is a novel dataset. As of June 2023, it has not been used for any other task or publication.

\subsection{Dataset composition}
TQ\textsuperscript{+} comprises two sub-datasets: TQ$^{\plus}_{\textsc{yt}}$, which consists of comments made by active users on YouTube videos, and TQ$^{\plus}_{\textsc{tw}}$, which consists of Twitter posts.

Fig.~\ref{fig:dataset-statistics-yt} shows the distribution of these comments and their labels across the topics in the YouTube corpus.The Twitter corpus has 6 topics that are in TQ$^{\plus}_{\textsc{yt}}$, and additionally, \textit{Tiktok ban} and \textit{Trump indictment}. The data distribution across all these topics, along with the distribution of the binary ground-truth labels, is shown in Fig.~\ref{fig:dataset-statistics-twitter}.

\paragraph{What are the instances?}
\begin{itemize*}
\item[(1)] TQ$^{\plus}_{\textsc{yt}}$: Each instance is a comment written by a YouTube user. Each comment is provided along with a link to the corresponding YouTube video and its title, and labeled as \textit{whataboutism} or not (1/0).
\item[(2)] TQ$^{\plus}_{\textsc{tw}}$: Each instance consists of a tweet written by a Twitter user and a comment written by a different Twitter user in response to that tweet. A link to the corresponding tweets is provided. Each comment is labeled as \textit{whataboutism} or not (1/0).
\end{itemize*}

For both collections, three annotators worked independently to generate the labels. Since multiple annotator judgments are obtained before the majority vote, we take cognizance of the recommendations made by~\citet{prabhakaran2021releasing}, and provide the individual annotator labels in the released datasets for flexible future use.

\paragraph{Are relationships between instances made explicit in the data?}~\\
\noindent\textit{TQ\textsuperscript{+}YouTube}: There are no explicit relationships between any two comments, other than their correspondence with the same YouTube video.

\noindent\textit{TQ\textsuperscript{+}Twitter}: 
There is no direct connection between any two comments, except for the fact that they both correspond to the same tweet.

\paragraph{How many instances of each type are there?}~\\
\noindent\textit{TQ\textsuperscript{+}YouTube}: There are $1,642$ instances in total, with 202 comments labeled as whataboutism (after aggregation via majority voting).

\noindent\textit{TQ\textsuperscript{+}Twitter}: 
 There are $1,202$ instances in total, with 508 comments labeled as whataboutism.

\paragraph{Is everything included, or does the data rely on external resources?} Everything is included. Due to the dynamic nature of social media and potential policy changes, however, future use of the data collection script may not gather the same data even if the comments remain on the platform.

\paragraph{Are there recommended data splits or evaluation measures?} The dataset comes with specified training, validation, and test splits ($80$\%, $5$\%, and $15$\%, respectively). Due to class imbalance in the naturally occurring label distribution, the recommended measures are macro average precision, recall, and $F_1$ scores.

\subsection{Data preprocessing}
Comments in any language other than English (en-US) were discarded. Non-linguistic characters such as emojis were removed from each instance. 

\subsection{Dataset distribution and maintenance}
The dataset is distributed together with the code, under the MIT license.
There are no fees or access/export restrictions on this dataset.

\subsection{Legal \& ethical considerations}
\paragraph{If the dataset relates to people, or was generated by people, were they informed about the data collection?} The data was collected from public web sources, through publicly available APIs. The authors of the comments collected in the dataset are presumably aware that their posts are public, but there was no mechanism of informing them explicitly about the development of the dataset.

\paragraph{Ethical review applications or approvals:} N/A.

\paragraph{Were there any provisions of privacy guarantees?} No. Neither the dataset nor the data collection process includes any names or user handles.

\paragraph{Does the dataset comply with the EU General Data Protection Regulation (GDPR)?} TQ\textsuperscript{+} does not contain any personal data of the authors of the collected comments. Further, it does not contain sensitive or confidential information.

\paragraph{Does the dataset contain information that might be considered inappropriate or offensive?} Some YouTube comments and Twitter posts may contain inappropriate or offensive language, but their presence is negligible. For example, in a random sampling of $273$ YouTube comments ($7.8$\% of the complete dataset), the authors of this publication found no such instance.
\newtheorem{theorem}{Theorem}
\newtheorem{definition}{Definition}

\begin{figure*}[!t]
\begin{minipage}[t]{0.24\textwidth}
\includegraphics[width=\linewidth, height=0.8\linewidth]{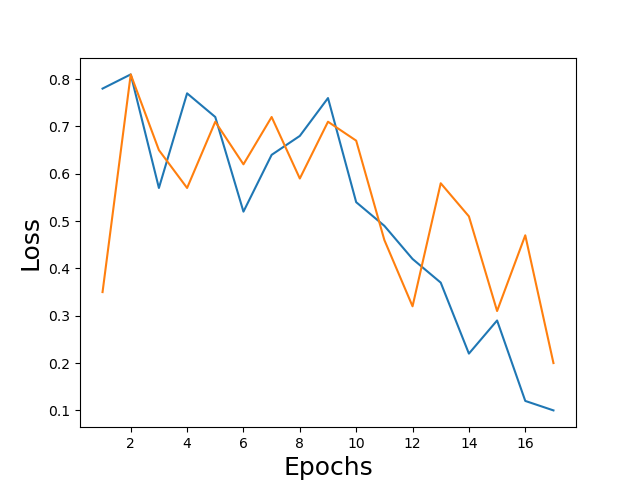}
\label{fig:sbert-mina-yt}
\end{minipage}%
\hfill
\begin{minipage}[t]{0.24\textwidth}
\includegraphics[width=\linewidth, height=0.8\linewidth]{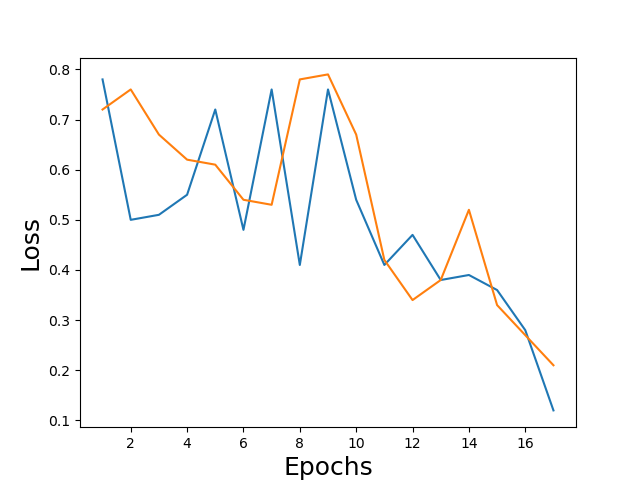}
\label{fig:roberta-irony-mina-yt}
\end{minipage}%
\hfill
\begin{minipage}[t]{0.24\textwidth}
\includegraphics[width=\linewidth, height=0.8\linewidth]{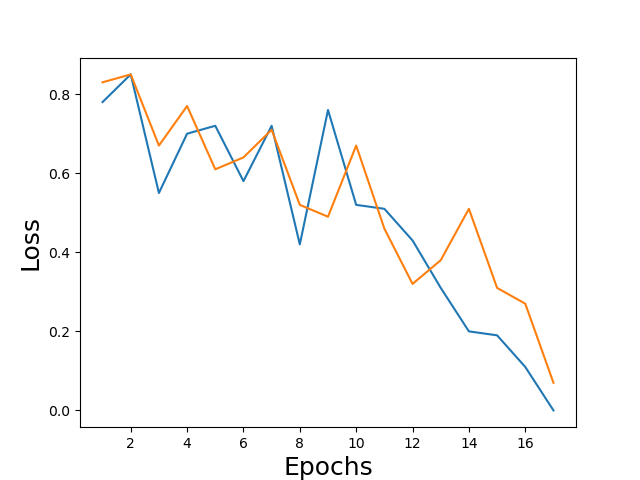}
\label{fig:sbert-mina-tw}
\end{minipage}%
\hfill
\begin{minipage}[t]{0.24\textwidth}
\includegraphics[width=\linewidth, height=0.8\linewidth]{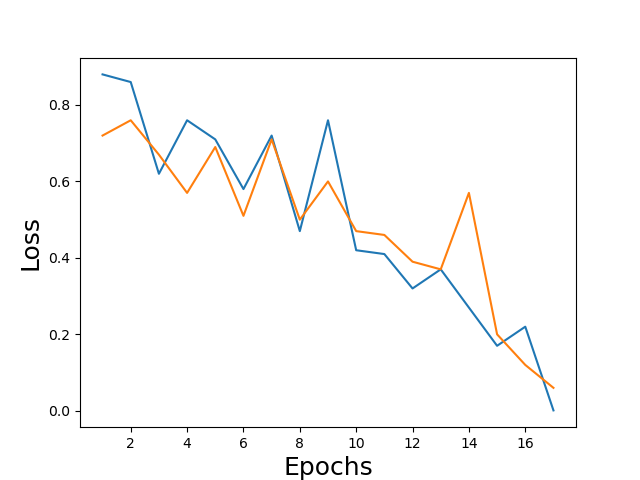}
\label{fig:roberta-irony-mina-tw}
\end{minipage}
\vspace*{-11pt}
\caption{Training (\sqbox{bestblue}) and validation (\sqbox{orange}) losses over all epochs for the \textsc{mina}-enhanced models reported in Table~\ref{tab:results}. From left to right: SBERT and RoBERTa\textsubscript{\,Irony} on YouTube comments; and then, SBERT and RoBERTa\textsubscript{\,Irony} on Twitter comments. Consistent trends between training and validation losses, where both decrease without significant divergence, indicate that a model is effectively generalizing to unseen data and not overfitting to the training set.}
\label{fig:loss-over-epochs}
\end{figure*}

\section{Proof of near-orthogonality}
\label{app:proof}
Formal proof of the statement (\S\hspace{1pt}\ref{ssec:experiments-semantic-similarity}): Cosine similarity may present challenges in high-dimensional vector spaces, as it becomes highly probable that any two vectors are nearly orthogonal.
\begin{theorem}[Bernstein's Inequality]
\label{thm:chernoff-bound}
Let $X_1, \ldots, X_n$ be independent random variables with $\mathbb{E}\left[X_i\right] = 0$ and $\vert X_i \vert \leq 1$ for all $i$. Let $X = \sum_{i \in [n]} X_i$, and let $\sigma^2$ denote the variance of $X_i$. Then,
\begin{equation*}
\mathbb{P}\left[ \vert X \vert \geq k\sigma \right] \leq 2e^{-k^2/4n}
\end{equation*}
\end{theorem}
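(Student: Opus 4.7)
The plan is to use the classical Chernoff exponential-moment method, which is the standard route to any Bernstein-type inequality for a sum of independent bounded variables. I would first apply Markov's inequality to $e^{\lambda X}$ for a free parameter $\lambda > 0$, writing $\mathbb{P}[X \geq t] \leq e^{-\lambda t}\,\mathbb{E}[e^{\lambda X}]$, and then exploit independence of the $X_i$ to factor the moment generating function as $\mathbb{E}[e^{\lambda X}] = \prod_{i=1}^{n} \mathbb{E}[e^{\lambda X_i}]$.

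The next step is to bound each factor. Since $|X_i| \leq 1$ and $\mathbb{E}[X_i] = 0$, Hoeffding's lemma gives $\mathbb{E}[e^{\lambda X_i}] \leq e^{\lambda^2/2}$, and hence $\mathbb{E}[e^{\lambda X}] \leq e^{n\lambda^2/2}$. Minimizing $e^{-\lambda t + n\lambda^2/2}$ over $\lambda > 0$ at $\lambda^\star = t/n$ produces the one-sided bound $\mathbb{P}[X \geq t] \leq e^{-t^2/(2n)}$. Running the same argument on $-X$ and union-bounding the two tails gives $\mathbb{P}[|X| \geq t] \leq 2e^{-t^2/(2n)}$. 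Setting $t = k\sigma$ and noting $\sigma^2 \leq 1$ (which follows from $|X_i| \leq 1$) then yields a bound of the form $2e^{-k^2/(cn)}$, matching the statement up to the absolute constant $c$.

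The main obstacle is matching the specific constant $1/4$ in the stated exponent rather than the $\sigma^2/2$ that the Chernoff--Hoeffding route naturally produces: the inequality as written is variance-free in its exponent, so one must either invoke a sharper Bernstein-form MGF bound such as $\mathbb{E}[e^{\lambda X_i}] \leq \exp\bigl(\lambda^2 \sigma_i^2/(2(1-\lambda/3))\bigr)$ and re-optimize, or accept constant slack in the exponent. The latter is acceptable for the downstream application, since the inequality is ultimately used only to deduce that $|X| = O(\sqrt{n})$ with high probability, implying that the cosine similarity $\langle u,v\rangle/(\|u\|\|v\|)$ of two random unit-norm vectors in $\mathbb{R}^n$ concentrates at $O(1/\sqrt{n})$ and hence vanishes as $n \to \infty$.
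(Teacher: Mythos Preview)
The paper does not prove this statement: it records Bernstein's inequality as a known result and invokes it as a black box in the subsequent proof of the near-orthogonality theorem. There is therefore no proof in the paper to compare your proposal against.

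Your Chernoff--Hoeffding route is the standard one and is sound up to the bound $\mathbb{P}[|X| \geq t] \leq 2e^{-t^{2}/(2n)}$. The step ``noting $\sigma^{2} \leq 1$ then yields a bound of the form $2e^{-k^{2}/(cn)}$'' goes the wrong way, however: substituting $t = k\sigma$ gives $2e^{-k^{2}\sigma^{2}/(2n)}$, and $\sigma^{2} \leq 1$ makes this larger, not smaller. Indeed, the inequality as literally stated---with $\sigma^{2}$ the variance of a single $X_{i}$ and exponent $-k^{2}/(4n)$---cannot hold when $\sigma$ is small, so the mismatch you flag is a defect of the paper's formulation rather than of your argument. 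The paper's actual application computes $\sigma^{2} = \operatorname{Var}(X) = 1/d$ and arrives at exponent $-\epsilon^{2}d/4$, i.e., effectively $-k^{2}/4$ with $\sigma = \sqrt{\operatorname{Var}(X)}$; that is precisely what your Hoeffding bound delivers, so your proof already gives what the paper needs for the downstream near-orthogonality claim.
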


\begin{definition}
A $d$-dimensional \textbf{ball} is defined as $B_d = \left\lbrace
\left(x_1, \ldots, x_d\right) : \lVert\mathbf{x}\rVert_2 \leq 1
\right\rbrace$.
\end{definition}

\begin{definition}
A vector $\mathbf{g} \in \mathbb{R}^d$ is a \textbf{Gaussian vector} if each $g_i\,(1 \leq i \leq d)$ is a uniformly and independently chosen $\mathcal{N}(0,1)$ random variable.
\end{definition}

\begin{theorem}
\label{thm:main}
Let $\mathbf{v}$ be a unit vector in $\mathbb{R}^d$, and let $\mathbf{x} = (x_1, \ldots, x_d)$ be a Gaussian vector on the surface of $B_d$ obtained by choosing each $x_i \in \{\pm 1\}$ and then normalizing via multiplication by $1/\sqrt{n}$. Further, let $X$ denote the random variable $\mathbf{v}\cdot\mathbf{x} = \sum_{1 \leq i \leq d} v_i x_i$. Then,
\begin{equation*}
\mathbb{P}\left( \vert X \vert \geq \epsilon \right) \leq 2e^{-d\epsilon^2/4}
\end{equation*}
\end{theorem}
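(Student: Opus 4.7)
The plan is to apply Bernstein's Inequality (Theorem~\ref{thm:chernoff-bound}) to a rescaled Rademacher sum. Writing each $x_i = \xi_i/\sqrt{d}$ with $\xi_i \in \{\pm 1\}$ uniform and independent, the inner product becomes $X = \mathbf{v}\cdot\mathbf{x} = \frac{1}{\sqrt{d}}\sum_{i=1}^{d} v_i \xi_i$. Setting $Y_i := v_i \xi_i$ and $Y := \sum_{i=1}^{d} Y_i$, we have $Y = \sqrt{d}\,X$, so bounding $\mathbb{P}[|X| \ge \epsilon]$ reduces to bounding $\mathbb{P}[|Y| \ge \epsilon\sqrt{d}]$.

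Next I would verify the hypotheses of Theorem~\ref{thm:chernoff-bound} for the $Y_i$. Symmetry of the Rademacher sign gives $\mathbb{E}[Y_i] = 0$, and since $\mathbf{v}$ is a unit vector, $|Y_i| = |v_i| \le \|\mathbf{v}\|_\infty \le \|\mathbf{v}\|_2 = 1$ supplies the required boundedness $|Y_i| \le 1$. The per-coordinate variance is $\operatorname{Var}(Y_i) = v_i^2 \le 1$, so treating the parameter $\sigma^2$ in Theorem~\ref{thm:chernoff-bound} as a uniform upper bound on $\operatorname{Var}(Y_i)$ lets us take $\sigma = 1$.

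Invoking Theorem~\ref{thm:chernoff-bound} with $n = d$ and $\sigma = 1$ then gives $\mathbb{P}[|Y| \ge k] \le 2\exp(-k^2/4d)$ for every $k > 0$. Choosing $k = \epsilon\sqrt{d}$ and substituting $Y = \sqrt{d}\,X$ on the left yields $\mathbb{P}[|X| \ge \epsilon] \le 2\exp(-d\epsilon^2/4)$, exactly matching the claim.

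The only real subtlety, which I expect to be the main (if minor) obstacle to writing this up cleanly, is the interpretation of $\sigma^2$ in Theorem~\ref{thm:chernoff-bound}: the coordinates $v_i$ of a general unit vector are not equal, so the individual variances $v_i^2$ need not coincide. The clean way to handle this is to view $\sigma^2$ as a uniform upper bound on the per-coordinate variance rather than a common value; this is legitimate here precisely because the unit-vector hypothesis $\|\mathbf{v}\|_2 = 1$ forces $v_i^2 \le 1$, and the resulting looser bound still yields the desired exponent $d\epsilon^2/4$.
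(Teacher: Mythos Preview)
Your approach is essentially the paper's: verify the mean-zero and boundedness hypotheses and then invoke Theorem~\ref{thm:chernoff-bound} directly. The only difference is cosmetic---the paper works with $X$ itself and computes its variance exactly as $\sigma^{2}=\sum_{i} v_i^{2}/d = 1/d$ from the unit-vector constraint, whereas you rescale to $Y=\sqrt{d}\,X$ and take $\sigma=1$ as a uniform per-coordinate upper bound; both routes feed the same $k=\epsilon\sqrt{d}$ into the inequality and land on the stated bound.
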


\begin{proof}
\begin{align*}
\mu
  &= \mathbb{E}\left[(X)\right] = \mathbb{E}\left[\left( \sum_{i = 1}^d v_i x_i \right)\right] \\
  &= \sum_{1 \leq i \leq d} \mathbb{E}\left[ v_i x_i \right] = 0. \\
\sigma^2
  &= \mathbb{E}\left[ \left(\sum_{i = 1}^d v_i x_i \right)^2 \right] \\
  &= \mathbb{E}\left[ \sum_{1 \leq i \leq j \leq d} v_i v_j x_i x_j \right] \\
  &= \sum_{1 \leq i \leq d} v_i^2\mathbb{E}[x_i^2] + \sum_{\substack{i, j = 1 \\ i \neq j}}^d v_i v_j \mathbb{E}[x_i x_j].
\end{align*}
Since $\mathbb{E}(x_i^2) = 1/n$, and if $i \neq j$, $\mathbb{E}(x_i x_j) = 0$,
\begin{align*}
\sigma^2
  &= \sum_{i = 1}^d \frac{v_i^2}{d} = \frac{1}{d}.
\end{align*}
By Bernstein's inequality (Theorem~\ref{thm:chernoff-bound}),
\begin{equation*}
\mathbb{P}\left(\vert X \vert \geq \epsilon\right) \leq 2e^{-\epsilon^2\cdot d/4}
\end{equation*}
\end{proof}
Thus, if two unit vectors $\mathbf{x}$ and $\mathbf{y}$ are chosen at random from $\mathbb{R}^d$, and the random variable $X$ denotes the inner product $\langle \mathbf{x}, \mathbf{y} \rangle$, the upper bound of Theorem~\ref{thm:main} shows that for high values of $d$, the inner product will have a very small value with high probability. In other words, in high-dimensional vector spaces, two randomly picked vectors are \textit{nearly orthogonal} with high probability.

\section{Training and Validation Loss Analysis}
This appendix presents the learning curves for the models enhanced with \textsc{mina}, illustrating the training and validation loss over all epochs. These plots, shown in Fig.~\ref{fig:loss-over-epochs}, depict their performances and generalization capabilities, addressing potential concerns about overfitting due to the moderate size of the TQ$^{\plus}$ collections.

\end{document}